\documentclass[letterpaper, 10 pt, conference]{ieeeconf}  

\IEEEoverridecommandlockouts                              
\usepackage{amsmath} 
\usepackage{amssymb}  
\usepackage{subcaption}

\usepackage{mathtools}
\newtheorem{theorem}{\bfseries Theorem}[section]
\newtheorem{lemma}{\bfseries Lemma}[section]
\newtheorem{assumption}{\bfseries Assumption}[section]
\newtheorem{definition}{\bfseries Definition}[section]
\newtheorem{proposition}{\bfseries Proposition}[section]
\usepackage{algorithm}
\usepackage{algpseudocode}
\usepackage{xcolor}
\usepackage{hyperref}
\hypersetup{breaklinks=true,colorlinks=true,linkcolor=black,citecolor=black}
\usepackage[noabbrev,capitalise,nameinlink]{cleveref}
\usepackage[sort]{cite}
\crefname{assumption}{Assumption}{Assumptions}
\allowdisplaybreaks

\title{\LARGE \bf
Finite-Time Analysis of Q-Value Iteration for General-Sum Stackelberg Games
}

\author{Narim Jeong and Donghwan Lee
\thanks{N. Jeong and D. Lee are with the Department of Electrical Engineering, Korea Advanced Institute of Science and Technology (KAIST), Daejeon, 34141, South Korea
        {\tt\small \{nrjeong, donghwan\}@kaist.ac.kr}}
}

\begin{document}

\maketitle
\thispagestyle{empty}
\pagestyle{empty}

\begin{abstract}
Reinforcement learning has been successful both empirically and theoretically in single-agent settings, but extending these results to multi-agent reinforcement learning in general-sum Markov games remains challenging. This paper studies the convergence of Stackelberg Q-value iteration in two-player general-sum Markov games from a control-theoretic perspective. We introduce a relaxed policy condition tailored to the Stackelberg setting and model the learning dynamics as a switching system. By constructing upper and lower comparison systems, we establish finite-time error bounds for the Q-functions and characterize their convergence properties. Our results provide a novel control-theoretic perspective on Stackelberg learning. Moreover, to the best of the authors' knowledge, this paper offers the first finite-time convergence guarantees for Q-value iteration in general-sum Markov games under Stackelberg interactions.
\end{abstract}

\section{INTRODUCTION}
Reinforcement learning (RL) has been successfully applied to a wide range of sequential decision-making problems and has demonstrated remarkable empirical performance in complex domains. Among the many RL algorithms, Q-learning~\cite{watkins1992q} has been one of the most fundamental and widely studied methods, with well-established convergence guarantees in single-agent Markov decision processes~\cite{fan2020theoretical,li2020sample,chen2021lyapunov,lee2020periodic}. These theoretical foundations, together with strong empirical success, have contributed to the widespread adoption of RL in practice.

In many real-world applications, however, decision-making involves multiple interacting agents whose outcomes depend on one another’s actions. This motivates the study of multi-agent reinforcement learning (MARL)~\cite{tan1993multi}, which extends classical RL to multi-agent environments. A common modeling approach in general-sum Markov games assumes that agents aim to reach a Nash equilibrium at each state, leading to algorithms such as Nash Q-learning~\cite{hu2003nash}. While this framework provides a natural extension of minimax Q-learning~\cite{littman1994markov} and captures a wide range of mixed cooperative–competitive interactions, it also introduces significant analytical challenges. In particular, it is known that establishing convergence for MARL in general-sum Markov games is notoriously difficult. There are equilibrium selection issues and non-ergodic learning dynamics~\cite{greenwald2003correlated} since the Nash equilibrium operator is not a contraction~\cite{hu2003nash} in general, and multiple equilibria may exist at a given state. As a result, convergence guarantees for Nash-based MARL are typically limited to restrictive settings or weaker solution concepts such as correlated equilibria~\cite{littman2001value,greenwald2003correlated,cisneros2023finite,song2021can,erez2023regret}. Moreover, even value iteration in general-sum Markov games may fail to converge and instead exhibit cyclic behaviors~\cite{zinkevich2005cyclic,giannou2022convergence}. Consequently, a general and intuitive convergence theory for MARL under Nash equilibria remains largely underdeveloped.

In contrast to Nash equilibria, many real-world multi-agent systems exhibit inherently asymmetric or hierarchical interactions, where one agent acts as the leader and others respond accordingly. Such scenarios arise naturally in applications such as autonomous driving~\cite{sadigh2016planning,wang2021game}, auction design~\cite{krishna2009auction}, and security games~\cite{sinha2018stackelberg}. In these settings, the Stackelberg equilibrium~\cite{bacsar1998dynamic} provides a more appropriate solution concept, as it explicitly models sequential decision-making, with the follower observing the leader's action and responding optimally. Such scenarios have motivated recent interest in Stackelberg learning in both static and dynamic games~\cite{fiez2020implicit,fiez2019convergence,vu2022stackelberg}. Despite its practical relevance, analyzing Stackelberg learning in Markov games is even more challenging than the Nash case~\cite{beck2023survey}. The induced Bellman operator becomes inherently asymmetric and policy-dependent because the follower's best response depends on the leader's action, which in turn depends on the value function. This creates a nested optimization structure and leads to highly nonlinear and potentially non-contractive dynamics. As a result, existing theoretical results for Stackelberg learning are limited to local convergence~\cite{fiez2020implicit,fiez2019convergence,zheng2022stackelberg,he2025learning} and rely on strong assumptions, such as a myopic follower or centralized coordination~\cite{zhong2021can,yu2022learning,niu2025finding}.

In this paper, we study the convergence properties of Stackelberg Q-value iteration in two-player general-sum Markov games. Our approach departs from equilibrium-based analyses and instead focuses on the evolution of Q-functions under the induced learning dynamics. By introducing a relaxed policy condition tailored to the Stackelberg setting, we provide a more intuitive and structured understanding of the learning process. In particular, we model the Stackelberg Q-value iteration as a switching system~\cite{liberzon2003switching}, where the switching behavior naturally captures the evolution of Q-function-induced dynamics over time. By constructing upper and lower comparison systems, we establish finite-time error bounds for the Q-functions and characterize their convergence behavior.

The main contributions of this paper can be summarized as follows: (i) We introduce a relaxed policy condition for the Stackelberg setting by adapting the worst-case response assumption commonly used in Nash Q-learning analyses~\cite{bowling2000convergence}. As this minimax-based assumption is overly restrictive and does not directly apply to asymmetric structure, we replace it with an $\epsilon$-relaxation on the upper bound. We show that this condition is valid and admits a natural interpretation in terms of best-response behavior. (ii) We establish finite-time error bounds for Stackelberg Q-value iteration in two-player general-sum Markov games. Existing works primarily provided asymptotic or local convergence guarantees~\cite{fiez2020implicit,fiez2019convergence,zheng2022stackelberg,he2025learning} or relied on restrictive assumptions~\cite{zhong2021can,yu2022learning,niu2025finding} in Stackelberg learning in general-sum games. While~\cite{bai2021sample} presented finite-time analysis in Stackelberg general sum settings, it did not extend to Markov games or Q-value iteration. In contrast, our analysis establishes an explicit finite-time convergence result under a relaxed policy condition. To the best of the authors' knowledge, this is the first work that provides finite-time convergence guarantees for Q-value iteration in general-sum Markov games under Stackelberg interactions. (iii) We develop a novel analytical framework based on switching systems to characterize the evolution of Stackelberg Q-functions. This framework captures the time-varying structure of Stackelberg-type interactions and enables a systematic comparison-based analysis. Our results extend recent switching-system-based analyses beyond the single-agent and zero-sum settings~\cite{lee2020unified,lee2021discrete,jeong2024finite,jeong2025finite}.

\section{PRELIMINARIES AND PROBLEM FORMULATION}
\textbf{Stackelberg Markov games and learning dynamics.}
We consider a two-player general-sum Markov game~\cite{shapley1953stochastic}, in which the two agents are referred to as leader and follower. The state space is defined as ${\cal S} := \{1,2,\ldots,|{\cal S}|\}$, and the action spaces of the leader and the follower are given by ${\cal A} := \{1,2,\ldots,|{\cal A}|\}$ and ${\cal B} := \{1,2,\ldots,|{\cal B}|\}$, respectively, where $|{\cal S}|$, $|{\cal A}|$, and $|{\cal B}|$ denote the cardinalities of the corresponding spaces. In the Stackelberg setting~\cite{fiez2019convergence,he2025learning,zhu2023multi}, the leader first selects an action $a \in {\cal A}$ and then the follower observes $a$ and selects $b \in {\cal B}$. The policies of the leader and the follower are defined as $\pi^1(a \mid s)$ and $\pi^2(b \mid s,a)$, respectively, where the policy of the follower explicitly depends on the leader's action. Then, the state $s$ transitions to the next state $s'$ according to the transition probability $P(s' \mid s,a,b)$. Each player $i \in \{1,2\}$ receives reward $r^i(s,a,b)$, which we denote by $r^i$.

For a policy pair $(\pi^1,\pi^2)$, the value function of player $i$ is defined as 
\[ V^i_{\pi^1,\pi^2}(s) := \mathbb{E} \left[ \sum_{k=0}^{\infty} \gamma^k r^i_{k+1} \middle| s_0=s \right], \quad i \in \{1,2\}, \]
where $r^i_k := r^i(s_k,a_k,b_k)$ denotes the reward received by player $i$ at time step $k$, and $\gamma \in [0,1)$ is the discount factor. The best response of the follower given $\pi^1$ is defined as
\begin{equation} \label{pi2_opt}
    \pi^{2*}(\pi^1) \in \arg\max_{\pi^2} V^2_{\pi^1,\pi^2}(s).
\end{equation}
Based on this best response, the leader chooses a policy that maximizes its own value, i.e.,
\begin{equation} \label{pi1_opt}
    \pi^{1*} \in \arg\max_{\pi^1} V^1_{\pi^1,\pi^{2*}(\pi^1)}(s).
\end{equation}
Then, a policy pair $(\pi^{1*},\pi^{2*})$ is said to be a Stackelberg equilibrium if it satisfies~\eqref{pi2_opt} and~\eqref{pi1_opt}. This leads to the following bilevel optimization formulation:
\[ \max_{\pi^1} V^1_{\pi^1,\pi^{2*}(\pi^1)}(s), \text{ where } \pi^{2*}(\pi^1) = \arg\max_{\pi^2} V^2_{\pi^1,\pi^2}(s). \]

In this paper, we assume that the policy is deterministic as $a_k(s)$ and $b_k(s,a)$ at iteration $k$, and the $\arg\max$ operator returns a unique maximizer. At each state $s$, the best response of the follower to the leader's action is defined as
\[ b_k(s,a) \in \arg\max_{b \in {\cal B}} Q^2_k(s,a,b), \]
where $Q^i_k$ represents the Q-function estimate for the leader and the follower at iteration $k$, respectively. Given this, the leader selects its action according to
\begin{equation} \label{pi1}
    a_k(s) \in \arg\max_{a \in {\cal A}} Q^1_k \left(s,a,b_k(s,a)\right).
\end{equation}
Then, the Q-functions for the leader and the follower are updated using the following recursion:
\begin{equation} \label{p1 q update}
\begin{aligned}
    Q^1_{k+1}&(s,a,b) = r^1(s,a,b) \\
    &+ \gamma \sum_{s' \in {\cal S}} P(s' \mid s,a,b) \max_{a \in {\cal A}} Q^1_k\left(s',a,b_k(s',a)\right)
\end{aligned}
\end{equation}
and
\begin{equation} \label{p2 q update}
\begin{aligned}
    Q^2_{k+1}&(s,a,b) = r^2(s,a,b) \\
    &+ \gamma \sum_{s' \in {\cal S}} P(s' \mid s,a,b) \max_{b \in {\cal B}} Q^2_k(s',a_k(s'),b).
\end{aligned}
\end{equation}
When a policy pair $(a_*,b_*)$ satisfies
\[ b_*(s,a) \in \arg\max_{b \in {\cal B}} Q^2_*(s,a,b) \]
and
\[ a_*(s) \in \arg\max_{a \in {\cal A}} Q^1_* \left(s,a,b_*(s,a)\right), \]
the pair $(a_*,b_*)$ constitutes a Stackelberg equilibrium.

Given the induced learning dynamics on the Q-functions, the iterates may converge to a fixed point under suitable conditions~\cite{fiez2019convergence,he2025learning}. However, in the absence of such conditions, the updates can exhibit non-convergent behaviors such as limit cycles. In such cases, the resulting policies form a periodic orbit rather than a stationary Stackelberg equilibrium.

\vspace{0.3cm}
\textbf{Switching system. }
A switching system~\cite{liberzon2003switching} is a class of nonlinear systems~\cite{khalil2002nonlinear} that operates among multiple subsystems according to a switching signal. Regarding the various types of switching systems, this paper focuses on an \textit{affine switching system} defined as
\[ x_{k+1} = A_{\sigma_k}x_k + b_{\sigma_k}, \]
where $x_k \in \mathbb{R}^n$ denotes the system state at time step $k$, $\sigma_k \in {\mathcal M} := {1,2,\ldots,{\mathcal M}}$ denotes the switching signal, and $A_{\sigma_k} \in \mathbb{R}^{n \times n}$ and $b_{\sigma_k} \in \mathbb{R}^n$ are the active subsystems. Both $A_{\sigma_k}$ and $b_{\sigma_k}$ depend on $\sigma_k$, which can be chosen either arbitrarily or according to a policy. It is known that the presence of the affine term $b_{\sigma_k}$ can make stability analysis more challenging.

\vspace{0.3cm}
\textbf{Problem formulation. }
Throughout the paper, we will use the following assumptions and notations to simplify the analysis.

\vspace{0.3cm}
\begin{assumption} \label{assumption1}
\leavevmode\\[-1em]
\begin{enumerate}
    \item The reward is bounded within a unit interval: $\max_{(s,a,b) \in {\cal S} \times {\cal A} \times {\cal B}} |r (s,a,b)| \leq 1$.
    \item The initial values of the Q-functions are bounded within a unit interval: $\left\|Q^1_0\right\|_\infty \le 1$ and $\left\|Q^2_0\right\|_\infty \le 1$.
\end{enumerate}
\end{assumption}

\vspace{0.3cm}
\begin{definition} \label{definition1}
\leavevmode\\[-1em]
\begin{enumerate}
    \item Q-function vector:
    \[ Q := \left[ {\begin{array}{*{20}c}
            Q_{1,1} \\
            \vdots \\
            Q_{|{\cal A}|,|{\cal B}|} \\
        \end{array}} \right],
    Q_{a,b} := \left[ {\begin{array}{*{20}c}
            Q(1,a,b) \\
            \vdots \\
            Q(|{\cal S}|,a,b) \\
        \end{array}} \right], \]
    where $Q \in \mathbb{R}^{|{\cal S}||{\cal A}||{\cal B}|}$ and $Q_{a,b} \in \mathbb{R}^{|{\cal S}|}$.

    \item State transition probability matrix:
    \[ P:= \left[ {\begin{array}{*{20}c}
            P_{1,1} \\
            \vdots \\
            P_{|{\cal A}|,|{\cal B}|}
        \end{array}} \right], \]
    \[ P_{a,b} := \left[ {\begin{array}{*{20}c}
            P(1 \mid 1,a,b) & \cdots & P(|{\cal S}| \mid 1,a,b) \\
            \vdots & \ddots & \vdots \\
            P(1 \mid |{\cal S}|,a,b) & \cdots & P(|{\cal S}| \mid |{\cal S}|,a,b)
        \end{array}} \right], \]
    where $P \in \mathbb{R}^{|{\cal S}||{\cal A}||{\cal B}| \times |{\cal S}|}$ and $P_{a,b} \in \mathbb{R}^{|{\cal S}|\times |{\cal S}|}$.

    \item Stackelberg action-selection matrix:
    For any leader policy $\phi: {\cal S} \to {\cal A}$ and the follower policy $\psi: {\cal S} \to {\cal B}$, define
    \[M_{[\phi,\psi]} := \left[ {\begin{array}{*{20}c}
            e_{\phi(1)}^T \otimes e_{\psi(1)}^T \otimes e_1^T \\
            \vdots \\
            e_{\phi(|{\cal S}|)}^T \otimes e_{\psi(|{\cal S}|)}^T \otimes e_{|{\cal S}|}^T
        \end{array}} \right], \]
    where $e_{\phi(s)}$, $e_{\psi(s)}$, and $e_s$ denote the standard basis vectors in $\mathbb{R}^{|{\cal A}|}$, $\mathbb{R}^{|{\cal B}|}$, and $\mathbb{R}^{|{\cal S}|}$, respectively, $\otimes$ denotes the Kronecker product, and $M_{[\phi,\psi]} \in \mathbb{R}^{|{\cal S}| \times |{\cal S}||{\cal A}||{\cal B}|}$.
\end{enumerate}
\end{definition}
\vspace{0.3cm}

Note that we can express the transition matrix applied to the selected Q-values with the aforementioned definitions as
\begin{align*}
    & P_{a,b} M_{[a_k,b_k]} Q_k \\
    =& \left[ {\begin{array}{*{20}{c}}
    \sum_{s' \in {\cal S}} P(s'|1,a,b) Q_k(s',a_k(s'),b_k(s',a_k(s'))) \\
    \vdots \\
    \sum_{s' \in {\cal S}} P(s'||{\cal S}|,a,b) Q_k(s',a_k(s'),b_k(s',a_k(s')))
\end{array}} \right].
\end{align*}

\section{STACKELBERG Q-FUNCTION INEQUALITIES AND EPSILON-RELAXATION}
In~\cite{bowling2000convergence}, it is assumed that each player treats the opponent’s policy as a worst-case response, which is natural in Nash-type formulations of stochastic games. However, this worst-case assumption can be overly restrictive since it is rooted in the minimax structure of Nash formulations. Moreover, it cannot be directly applied to the Stackelberg setting due to its inherent asymmetry. Therefore, instead of assuming mutual worst-case responses, we adopt a Stackelberg setting and relax this assumption by introducing an $\epsilon$-relaxation on the upper bound. 

First, we establish the following lemma that shows inequalities between Q-function values in the Stackelberg setting.
\vspace{0.3cm}
\begin{lemma}[Stackelberg Q-function inequalities] \label{lem1}
For any state $s$, policies $\mu^1, \mu^2$, and iteration $k \ge 0$, the following inequalities hold:
\[ Q^1_k(s,\mu^1(s),b_k(s,\mu^1(s))) \le Q^1_k(s,a_k(s),b_k(s,a_k(s))), \]
\[ Q^2_k(s,a_k(s),\mu^2(s)) \le Q^2_k(s,a_k(s),b_k(s,a_k(s))), \]
and
\[ Q^1_*(s,\mu^1(s),b_*(s,\mu^1(s))) \le Q^1_*(s,a_*(s),b_*(s,a_*(s))), \]
\[ Q^2_*(s,a_*(s),\mu^2(s)) \le Q^2_*(s,a_*(s),b_*(s,a_*(s))). \]
\end{lemma}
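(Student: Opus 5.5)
The four inequalities follow directly from the definitions of the greedy Stackelberg policies $(a_k,b_k)$ and of the equilibrium pair $(a_*,b_*)$ as pointwise maximizers, so the plan is to unfold these definitions state by state. No contraction or dynamical argument is needed. Throughout, fix an arbitrary state $s$ and iteration $k\ge 0$, and regard $\mu^1:{\cal S}\to{\cal A}$ and $\mu^2:{\cal S}\to{\cal B}$ as arbitrary deterministic policies. Only the values $\mu^1(s)\in{\cal A}$ and $\mu^2(s)\in{\cal B}$ enter each inequality.

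For the first inequality, recall from \eqref{pi1} that
\[
a_k(s)\in\arg\max_{a\in{\cal A}} Q^1_k\bigl(s,a,b_k(s,a)\bigr).
\]
The function being maximized is $a\mapsto Q^1_k(s,a,b_k(s,a))$, with the follower's best response substituted. Evaluating this same function at the feasible point $a=\mu^1(s)$ yields a value no larger than the maximum, which is attained at $a_k(s)$. This is exactly the claim. The point to check is that the left-hand side uses $b_k(s,\mu^1(s))$ rather than an arbitrary follower action, so it is indeed the leader's objective evaluated at $\mu^1(s)$.

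For the second inequality, use $b_k(s,a)\in\arg\max_{b\in{\cal B}} Q^2_k(s,a,b)$ with the particular leader action $a=a_k(s)$. Then $b_k(s,a_k(s))$ maximizes $b\mapsto Q^2_k(s,a_k(s),b)$, and comparing with the feasible point $b=\mu^2(s)$ gives the bound. The two equilibrium inequalities follow by the identical argument, now using the defining conditions $b_*(s,a)\in\arg\max_b Q^2_*(s,a,b)$, applied at $a=a_*(s)$, and $a_*(s)\in\arg\max_a Q^1_*(s,a,b_*(s,a))$.

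There is no real obstacle here. The only care required is bookkeeping: the follower's maximization must be instantiated at the correct leader action, namely $a_k(s)$ or $a_*(s)$, while the leader's comparison must keep the follower's best response $b_k(\cdot,\cdot)$ or $b_*(\cdot,\cdot)$ inside the objective. The paper's standing assumption of unique maximizers is not needed, since membership in the $\arg\max$ suffices.
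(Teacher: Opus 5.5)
Your proposal is correct and follows essentially the same argument as the paper: both proofs evaluate the objective that defines $a_k(s)$ at the feasible point $\mu^1(s)$, evaluate the follower's objective at the leader action $a_k(s)$ against $\mu^2(s)$, and handle the $(a_*,b_*)$ inequalities the same way using the equilibrium conditions. Your further remark is also right: membership in the $\arg\max$ is enough, so the uniqueness-of-maximizer assumption is not needed.
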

\vspace{0.3cm}
\begin{proof}
We first consider the inequality for $Q^1_k$. Recall that
\[ a_k(s) \in \arg\max_{a \in {\cal A}} Q^1_k \left(s,a,b_k(s,a)\right) \]
from~\eqref{pi1}. It follows that for any $a \in {\cal A}$,
\[ Q^1_k(s,a,b_k(s,a)) \le Q^1_k(s,a_k(s),b_k(s,a_k(s))). \]
Substituting $a=\mu^1(s)$ yields the desired result. The remaining inequalities follow analogously.
\end{proof}
\vspace{0.3cm}

Note that the inequalities involving $Q^i_*$ follow directly from the definition of a Stackelberg equilibrium in~\eqref{pi2_opt} and~\eqref{pi1_opt}.

Next, we revisit the assumption in~\cite{bowling2000convergence}, which was originally introduced for Nash settings under a worst-case response framework. We move beyond the mutual worst-case response assumption and instead introduce an $\epsilon$-relaxation on the upper bound within the Stackelberg framework as follows:
\vspace{0.3cm}
\begin{assumption}[$\epsilon$-relaxed best response] \label{assumption2}
For some $\epsilon > 0$, for any state $s$, policies $\mu^1, \mu^2$, and iteration $k \ge 0$, we assume that
\[ Q^1_k(s,a_k(s),b_k(s,a_k(s))) \le Q^1_k(s,a_k(s),\mu^2(s)) + \epsilon, \]
\[ Q^2_k(s,a_k(s),b_k(s,a_k(s))) \le Q^2_k(s,\mu^1(s),b_k(s,a_k(s))) + \epsilon, \]
and
\[ Q^1_*(s,a_*(s),b_*(s,a_*(s))) \le Q^1_*(s,a_*(s),\mu^2(s)) + \epsilon, \]
\[ Q^2_*(s,a_*(s),b_*(s,a_*(s))) \le Q^2_*(s,\mu^1(s),b_*(s,a_*(s))) + \epsilon. \]
\end{assumption}
\vspace{0.3cm}

To justify~\cref{assumption2}, we show that there exists a constant $\epsilon > 0$ satisfying the required condition. To this end, we establish the following lemma by adopting the proof approach in~\cite{gosavi2006boundedness}.
\vspace{0.3cm}
\begin{lemma} \label{lem2}
For every $k \ge 0$,
\[ \left\|Q^i_k\right\|_{\infty} \le \frac{1}{1-\gamma}, \quad i=1,2. \]
\end{lemma}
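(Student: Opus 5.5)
I will prove the bound by induction on $k$, treating both players' iterates simultaneously. The only ingredients are \cref{assumption1}, the updates \eqref{p1 q update}--\eqref{p2 q update}, and the fact that each row of $P_{a,b}$ is a probability vector. I will write $r^i$ for either reward; \cref{assumption1} bounds it by $1$ in absolute value, and I will apply it to both $r^1$ and $r^2$.

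\emph{Base case.} By \cref{assumption1}, $\|Q^i_0\|_\infty \le 1 \le \frac{1}{1-\gamma}$, since $\gamma\in[0,1)$.

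\emph{Inductive step.} Suppose $\|Q^i_k\|_\infty \le \frac{1}{1-\gamma}$ for $i=1,2$, and fix $(s,a,b)$. For the leader, every quantity $Q^1_k(s',a',b_k(s',a'))$ is an entry of $Q^1_k$. Hence
\[ \Bigl|\max_{a'\in{\cal A}} Q^1_k(s',a',b_k(s',a'))\Bigr| \le \|Q^1_k\|_\infty \]
for every $s'$, whatever the switching policies $a_k,b_k$ happen to be. Because $\sum_{s'}P(s'\mid s,a,b)=1$ with nonnegative weights, the triangle inequality applied to \eqref{p1 q update} gives
\[ |Q^1_{k+1}(s,a,b)| \le |r^1(s,a,b)| + \gamma\|Q^1_k\|_\infty \le 1+\frac{\gamma}{1-\gamma} = \frac{1}{1-\gamma}. \]
The same argument applies to \eqref{p2 q update}, since $Q^2_k(s',a_k(s'),b')$ is also an entry of $Q^2_k$. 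Taking the maximum over $(s,a,b)$ closes the induction.

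An alternative route, closer in spirit to \cite{gosavi2006boundedness}, is to unroll the recursion to get $\|Q^i_k\|_\infty \le \sum_{j=0}^{k-1}\gamma^j + \gamma^k\|Q^i_0\|_\infty$. This gives the same bound, because $\sum_{j=0}^{k-1}\gamma^j + \gamma^k \le \frac{1-\gamma^k}{1-\gamma}+\gamma^k \le \frac{1}{1-\gamma}$.

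There is no real obstacle here. The one point that needs care is that the leader's backup evaluates $Q^1_k$ at the follower's best-response action $b_k(s',a')$, which is chosen from $Q^2_k$ rather than from $Q^1_k$. So the bound must rest on the fact that each such value is an entry of $Q^1_k$, and it cannot use any monotonicity or contraction property of a max operator. Once this is noted, the two players decouple and the standard single-agent boundedness argument goes through.
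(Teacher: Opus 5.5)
Your proof is correct and follows essentially the same route as the paper. Both arguments rest on the one-step bound $\|Q^i_{k+1}\|_\infty \le 1+\gamma\|Q^i_k\|_\infty$, which comes from the triangle inequality and the row-stochasticity of $P$; the paper unrolls it into $1+\gamma+\cdots+\gamma^k$ (your ``alternative route''), while you induct directly on the target $\frac{1}{1-\gamma}$. Your version also bounds the inner maximum directly by $\|Q^i_k\|_\infty$ rather than rewriting it as a specific policy-indexed entry, as the paper does, which makes it explicit that the coupling between the players' policies plays no role here.
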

\vspace{0.3cm}
\begin{proof}
From~\eqref{p1 q update}, one obtains
\begin{align*}
    & |Q^i_{k+1}(s,a,b)| \\
    \le& |r^i(s,a,b)| \\
    &+ \gamma \left|\sum_{s' \in {\cal S}} P(s' \mid s,a,b) \max_{a \in {\cal A}} Q^1_k\left(s',a,b_k(s',a)\right)\right| \\
    \le& |r^i(s,a,b)| \\
    &+ \gamma \sum_{s' \in {\cal S}} P(s' \mid s,a,b) \left|\max_{a \in {\cal A}} Q^1_k\left(s',a,b_k(s',a)\right)\right| \\
    \le& 1 + \gamma \sum_{s' \in {\cal S}} P(s' \mid s,a,b) \left| Q^1_k\left(s',a_k(s'),b_k(s',a)\right)\right| \\
    \le& 1+\gamma \max_{s' \in {\cal S}} |Q^i_k(s',a_k(s'),b_k(s',a_k(s'))|,
\end{align*}
where the second inequality follows from the triangular inequality, and the third inequality follows from~\cref{assumption1}. Applying this inequality recursively yields $\|Q^i_k\|_{\infty} \le 1+\gamma+\cdots+\gamma^k \le \frac{1}{1-\gamma}$, which completes the proof.
\end{proof}
\vspace{0.3cm}

Then, we establish the existence of $\epsilon$ satisfying~\cref{assumption2}.
\vspace{0.3cm}
\begin{lemma} [$\epsilon$-existence]
For any state $s$, policies $\mu^1, \mu^2$ and iteration $k \ge 0$, there exists $\epsilon > 0$ such that~\cref{assumption2} holds.
\end{lemma}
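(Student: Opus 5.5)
The plan is to use the uniform boundedness of the Q-iterates from \cref{lem2} and take $\epsilon$ equal to the largest possible gap between any two Q-values. We will then verify each of the four inequalities in \cref{assumption2} with this single $\epsilon$.

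\textbf{Step 1 (iterates).} For any state $s$, policies $\mu^1,\mu^2$, and $k\ge 0$, apply \cref{lem2} to both entries in each inequality. For the first inequality this gives
\[
Q^1_k(s,a_k(s),b_k(s,a_k(s))) - Q^1_k(s,a_k(s),\mu^2(s)) \le 2\left\|Q^1_k\right\|_\infty \le \frac{2}{1-\gamma}.
\]
The same argument applies to the second inequality. There we compare $Q^2_k$ at $(s,a_k(s),b_k(s,a_k(s)))$ with $Q^2_k$ at $(s,\mu^1(s),b_k(s,a_k(s)))$, and \cref{lem2} for $i=2$ again bounds the difference by $\frac{2}{1-\gamma}$. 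Any $\epsilon \ge \frac{2}{1-\gamma}$ therefore works for the first two inequalities, uniformly in $s$, $\mu^1$, $\mu^2$ and $k$.

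\textbf{Step 2 (fixed point).} The remaining two inequalities involve $Q^i_*$, so we also need $\|Q^i_*\|_\infty \le \frac{1}{1-\gamma}$. We take $Q^i_*$ to be a fixed point of the updates \eqref{p1 q update} and \eqref{p2 q update}, with the max replaced by the equilibrium selections $(a_*,b_*)$. Bound $|Q^i_*(s,a,b)|$ by $1+\gamma\|Q^i_*\|_\infty$ using \cref{assumption1} and the fact that $P(\cdot\mid s,a,b)$ is a probability vector. Rearranging gives $\|Q^i_*\|_\infty \le \frac{1}{1-\gamma}$. Alternatively, view $Q^i_*$ as a discounted sum of rewards bounded by $1$. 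The same difference argument as in Step 1 then gives a gap of at most $\frac{2}{1-\gamma}$ in both $Q_*$ inequalities.

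\textbf{Step 3 (conclusion).} Set $\epsilon := \frac{2}{1-\gamma} > 0$. This constant does not depend on $s$, $\mu^1$, $\mu^2$ or $k$, so all four inequalities of \cref{assumption2} hold simultaneously.

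\textbf{Main obstacle.} The one delicate point is Step 2. \cref{lem2} only covers the iterates, so boundedness of $Q^i_*$ has to come either from a separate fixed-point argument or from a limiting argument if $Q^i_*$ arises as a limit of the iterates. In the limiting case we would pass $\|Q^i_k\|_\infty \le \frac{1}{1-\gamma}$ to the limit.

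Finally, this gives only existence with a crude $\epsilon$. A smaller $\epsilon$ could be obtained from the actual ranges of $Q^i_k$, for example $\max Q - \min Q$ at each iteration, but the statement asks only for existence.
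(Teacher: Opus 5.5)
Your proposal is correct and follows the same route as the paper: bound every entry of $Q^i_k$ by $\frac{1}{1-\gamma}$ via \cref{lem2}, then take $\epsilon \ge \frac{2}{1-\gamma}$ using the triangle inequality. Your Step 2 justifies $\|Q^i_*\|_\infty \le \frac{1}{1-\gamma}$ separately from the fixed-point relation, which is a useful addition, since the paper handles the $Q_*$ inequalities only with ``the remaining inequalities can be derived in the same manner.''
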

\vspace{0.3cm}
\begin{proof}
We first consider $Q^1_k$. For any state $s$ and actions $a,b$, we have
\[ |Q^1_k(s,a,b)| \le \frac{1}{1-\gamma} \]
by utilizing ~\cref{lem2}. Then,
\begin{align*}
    & \left|Q_k^1(s,a_k(s),b_k(s,a_k(s))) - Q_k^1(s,a_k(s),\mu^2(s))\right| \\
    \le& |Q_k^1(s,a_k(s),b_k(s,a_k(s)))| + |Q_k^1(s,a_k(s),\mu^2(s))| \\
    \le& \frac{2}{1-\gamma}.
\end{align*}
Thus, choosing $\epsilon \ge \frac{2}{1-\gamma}$ ensures that the inequality for $Q^1_k$ in~\cref{assumption2} holds. The remaining inequalities can be derived in the same manner.
\end{proof}
\vspace{0.3cm}

In summary,~\cref{lem1} follows directly from the definition, while ~\cref{assumption2} is imposed as an assumption. These results will be used in the analysis of Stackelberg Q-value iteration in the next section.

\section{FINITE-TIME ANALYSIS OF STACKELBERG Q-VALUE ITERATION IN GENERAL-SUM MARKOV GAMES}
\subsection{Construction of Upper and Lower Comparison Systems}
In this section, we first analyze the problem from the perspective of the leader. To study the convergence of~\eqref{p1 q update}, we reduce the analysis to the stability of an affine switching system. Two straightforward comparison iterations are used, which are easier to analyze: the \textit{upper iteration} provides an upper bound on~\eqref{p1 q update}, while the \textit{lower iteration} provides a lower bound. Then, we transform both iterations into comparison switching systems.

Based on~\eqref{p1 q update}, the upper iteration can be represented as 
\begin{align*}
    & Q^U_{k+1}(s,a,b) - Q^1_*(s,a,b) \\
    =& \gamma \sum_{s' \in {\cal S}} P(s'|s,a,b) \left\{Q^1_k(s',a_k(s'),b_*(s,a_k(s')))\right. \\
    &- \left.Q^1_*(s',a_k(s'),b_*(s,a_k(s')))\right\} + \gamma\epsilon
\end{align*}
and the lower iteration as
\begin{align*}
    & Q^L_{k+1}(s,a,b) - Q^1_*(s,a,b) \\
    =& \gamma \sum_{s' \in {\cal S}} P(s'|s,a,b) \left\{Q^1_k(s',a_*(s'),b_k(s,a_*(s')))\right. \\
    &- \left.Q^1_*(s',a_*(s'),b_k(s,a_*(s')))\right\} - \gamma\epsilon,
\end{align*}
which are established by the following propositions:
\vspace{0.3cm}
\begin{proposition}
Assume that $Q_0^U(s,a,b) \ge Q_0(s,a,b)$ for all $(s,a,b)$. Then, $Q_k^U(s,a,b) \ge Q_k(s,a,b)$ holds for all $(s,a,b)$ and $k \ge 0$.
\end{proposition}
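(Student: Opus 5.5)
Since the stated proposition leaves the notation $Q_k$ implicit, the plan is to read it as $Q_k=Q^1_k$, the leader iterate from \eqref{p1 q update}. The upper iteration is read as $Q^U_{k+1}=r^1+\gamma P M_{[a_k,b_*]}Q^U_k+\gamma\epsilon\mathbf 1$. This is the natural reading of the displayed recursion once $Q^1_k$ on its right-hand side is replaced by the comparison iterate itself. The argument is an induction on $k$.

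The base case is the hypothesis $Q^U_0\ge Q^1_0$. For the inductive step, assume $Q^U_k\ge Q^1_k$ componentwise. Subtracting \eqref{p1 q update} from the upper recursion, the rewards cancel. It then suffices to show, for each next state $s'$,
\begin{equation*}
\max_{a\in{\cal A}} Q^1_k(s',a,b_k(s',a)) \le Q^U_k(s',a_k(s'),b_*(s',a_k(s'))) + \epsilon ,
\end{equation*}
because $P(\cdot\mid s,a,b)$ is a probability vector with nonnegative entries, so the inequality survives averaging and multiplication by $\gamma$.

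The left-hand side equals $Q^1_k(s',a_k(s'),b_k(s',a_k(s')))$ by \eqref{pi1}, equivalently by the first inequality of \cref{lem1}. Apply the first inequality of \cref{assumption2} with the follower policy $\mu^2(s'):=b_*(s',a_k(s'))$. This gives
\begin{equation*}
Q^1_k(s',a_k(s'),b_k(s',a_k(s'))) \le Q^1_k(s',a_k(s'),b_*(s',a_k(s')))+\epsilon .
\end{equation*}
The induction hypothesis then bounds $Q^1_k(s',a_k(s'),b_*(s',a_k(s')))$ by $Q^U_k$ at the same entry. Chaining these and multiplying by $\gamma P(s'\mid s,a,b)\ge0$, then summing over $s'$, gives $Q^U_{k+1}(s,a,b)\ge Q^1_{k+1}(s,a,b)$ for every $(s,a,b)$, which closes the induction. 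In matrix form the step is $\gamma P(M_{[a_k,b_*]}Q^U_k+\epsilon\mathbf 1 - \max\text{-term})\ge 0$. This holds because $P$ and $M$ are nonnegative, hence monotone, operators.

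The main obstacle is notational rather than mathematical. The displayed upper iteration is written with $Q^1_k$ and with $b_*(s,\cdot)$ evaluated at the current state $s$ instead of the next state $s'$. I would treat $s$ as a typo for $s'$ and interpret the recursion as driven by $Q^U_k$, so that it is a genuine comparison system. If instead the recursion is taken literally with $Q^1_k$ on the right, no induction is needed: the same chain of \cref{lem1} and \cref{assumption2} gives $Q^U_{k+1}\ge Q^1_{k+1}$ directly for all $k\ge0$, with $k=0$ covered by the hypothesis. Either way, the $\epsilon$-relaxation is precisely what allows the follower's action to be swapped from $b_k$ to $b_*$, and the added $\gamma\epsilon$ term absorbs the cost of that swap.
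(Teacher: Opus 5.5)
\textbf{There is a genuine gap: the sequence you run the induction on is not the paper's $Q^U$.}

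The paper's upper iteration is centered at $Q^1_*$; see \eqref{upper comp sys}:
\[
Q^U_{k+1} = Q^1_* + \gamma P M_{[a_k,b_*]}(Q^U_k - Q^1_*) + \gamma\epsilon\mathbf{1}.
\]
Replacing $Q^1_k$ by $Q^U_k$ in the displayed iteration gives this, not $r^1 + \gamma P M_{[a_k,b_*]}Q^U_k + \gamma\epsilon\mathbf{1}$. The reason is that the $Q^1_*$ inside the sum is evaluated at $(a_k(s'), b_*(s',a_k(s')))$, not at the equilibrium actions. Using the fixed-point identity $Q^1_* = r^1 + \gamma P M_{[a_*,b_*]}Q^1_*$ (which the paper's first equality relies on), one gets
\[
Q^U_{k+1} = r^1 + \gamma P M_{[a_k,b_*]}Q^U_k + \gamma\epsilon\mathbf{1} + \gamma P\left(M_{[a_*,b_*]} - M_{[a_k,b_*]}\right)Q^1_* .
\]
The last term is generally nonzero. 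In the paper's single-state example it equals $0.8(2.5-2.2)=0.24$ whenever $a_k=1$.

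So your induction proves domination by a different, smaller sequence, and that sequence does not satisfy \eqref{upper comp sys}, which is what \cref{thm1} uses. The same mismatch hurts your remark about the literal reading. There the left side carries $-Q^1_*(s,a,b)$ and the right side carries $-Q^1_*(s',a_k(s'),b_*(s',a_k(s')))$. Your chain of \eqref{pi1} and \cref{assumption2} has nothing that reconciles these two terms.

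\textbf{The missing ingredient.} You need the $Q^1_*$ half of \cref{lem1} with $\mu^1=a_k$:
\[
Q^1_*(s',a_k(s'),b_*(s',a_k(s'))) \le Q^1_*(s',a_*(s'),b_*(s',a_*(s'))).
\]
Since $P\ge 0$, this says exactly that the extra term above is componentwise nonnegative. It is the first inequality in the paper's chain, the leader-side swap $a_*\to a_k$ inside $Q^1_*$. Your argument only performs the follower-side swap $b_k\to b_*$ via \cref{assumption2}.

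With this added, your inductive step closes:
\begin{enumerate}
\item Assume $Q^U_k\ge Q^1_k$ for the paper's $Q^U_k$.
\item Drop the nonnegative term to get $Q^U_{k+1}\ge r^1+\gamma P M_{[a_k,b_*]}Q^U_k+\gamma\epsilon\mathbf{1}$.
\item Your chain through \eqref{pi1}, \cref{assumption2} with $\mu^2(s')=b_*(s',a_k(s'))$, and the hypothesis then bounds the right-hand side below by $Q^1_{k+1}$.
\end{enumerate}
The result then coincides in substance with the paper's proof.
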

\vspace{0.3cm}
\begin{proof}
Suppose that the statement holds for some $k \ge 0$. Then,
\begin{align*}
    & Q^1_{k+1}(s,a,b) - Q^1_*(s,a,b) \\
    =& \gamma \sum_{s' \in {\cal S}} P(s'|s,a,b) Q^1_k(s',a_k(s'),b_k(s,a_k(s'))) \\
    &- \gamma \sum_{s' \in {\cal S}} P(s'|s,a,b) Q^1_*(s',a_*(s'),b_*(s,a_*(s'))) \\
    \le& \gamma \sum_{s' \in {\cal S}} P(s'|s,a,b) Q^1_k(s',a_k(s'),b_k(s,a_k(s'))) \\
    &- \gamma \sum_{s' \in {\cal S}} P(s'|s,a,b) Q^1_*(s',a_k(s'),b_*(s,a_k(s'))) \\
    \le& \gamma \sum_{s' \in {\cal S}} P(s'|s,a,b) Q^1_k(s',a_k(s'),b_*(s,a_k(s'))) + \gamma\epsilon \\
    &- \gamma \sum_{s' \in {\cal S}} P(s'|s,a,b) Q^1_*(s',a_k(s'),b_*(s,a_k(s'))) \\
    =& \gamma \sum_{s' \in {\cal S}} P(s'|s,a,b) \left\{Q^1_k(s',a_k(s'),b_*(s,a_k(s')))\right. \\
    &- \left.Q^1_*(s',a_k(s'),b_*(s,a_k(s')))\right\} + \gamma\epsilon \\
    \le& \gamma \sum_{s' \in {\cal S}} P(s'|s,a,b) \left\{Q^U_k(s',a_k(s'),b_*(s,a_k(s')))\right. \\
    &- \left.Q^1_*(s',a_k(s'),b_*(s,a_k(s')))\right\} + \gamma\epsilon \\
    =& Q^U_{k+1}(s,a,b) - Q^1_*(s,a,b),
\end{align*}
where the first equality follows from~\eqref{p1 q update} and the assumption that the maximizer is unique for all states, the first and second inequalities follow from~\cref{lem1,assumption2}, and the last inequality follows from the assumption $Q_k^U(s,a,b) \ge Q_k(s,a,b)$. The proof is completed by induction.
\end{proof}

\vspace{0.3cm}
\begin{proposition}
Assume that $Q_0^L(s,a,b) \le Q_0(s,a,b)$ for all $(s,a,b)$. Then, $Q_k^L(s,a,b) \le Q_k(s,a,b)$ holds for all $(s,a,b)$ and $k \ge 0$.
\end{proposition}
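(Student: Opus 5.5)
We argue by induction on $k$, mirroring the proof of the preceding proposition for the upper iteration but with the roles of the leader's choices swapped. The base case $k=0$ is the hypothesis $Q_0^L \le Q_0$. For the inductive step, we suppose $Q_k^L(s,a,b) \le Q^1_k(s,a,b)$ for all $(s,a,b)$ and aim to show
\[ Q^L_{k+1}(s,a,b) - Q^1_*(s,a,b) \le Q^1_{k+1}(s,a,b) - Q^1_*(s,a,b). \]

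We start from~\eqref{p1 q update}. Using uniqueness of the maximizer, the error $Q^1_{k+1}(s,a,b) - Q^1_*(s,a,b)$ equals $\gamma$ times the expectation over $s'$ of
\[ Q^1_k(s',a_k(s'),b_k(s',a_k(s'))) - Q^1_*(s',a_*(s'),b_*(s',a_*(s'))). \]
The first term is bounded below by the leader's optimality in~\cref{lem1}, applied with $\mu^1 = a_*$:
\[ Q^1_k(s',a_k(s'),b_k(s',a_k(s'))) \ge Q^1_k(s',a_*(s'),b_k(s',a_*(s'))). \]
This places both terms at the same leader action $a_*(s')$. The subtracted term $Q^1_*(s',a_*(s'),b_*(s',a_*(s')))$ is then bounded above by the third inequality of~\cref{assumption2}, applied with $\mu^2(s') = b_k(s',a_*(s'))$:
\[ Q^1_*(s',a_*(s'),b_*(s',a_*(s'))) \le Q^1_*(s',a_*(s'),b_k(s',a_*(s'))) + \epsilon. \]
Combining the two bounds and summing against $P(s'\mid s,a,b)$, which is nonnegative and sums to one, gives
\[ Q^1_{k+1} - Q^1_* \ge \gamma \sum_{s'} P(s'\mid s,a,b)\bigl\{Q^1_k(s',a_*(s'),b_k(s',a_*(s'))) - Q^1_*(s',a_*(s'),b_k(s',a_*(s')))\bigr\} - \gamma\epsilon. \]
Finally, the inductive hypothesis replaces $Q^1_k$ by the smaller $Q^L_k$ in this sum. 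Nonnegativity of $P$ preserves the direction, so the right-hand side is at least $Q^L_{k+1}(s,a,b) - Q^1_*(s,a,b)$ by the definition of the lower iteration, and the induction closes.

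The main obstacle is one of bookkeeping. Each earlier inequality must be invoked with exactly the right instantiation of $\mu^1$ and $\mu^2$, so that both $Q^1_k$ and $Q^1_*$ end up evaluated at the common pair $(a_*(s'), b_k(s',a_*(s')))$ that appears in the lower iteration. The alternative pairing, lowering $Q^1_k$ using the follower's choice, would need a lower bound that~\cref{assumption2} does not provide. We must also check that the policy $\mu^2(\cdot) = b_k(\cdot,a_*(\cdot))$ is admissible in~\cref{assumption2}. It is, since that assumption holds for arbitrary policies, and $\mu^2$ need only be a function of the state once $a_*$ is fixed. We also read the argument $b_k(s,a_*(s'))$ in the paper's lower iteration as $b_k(s',a_*(s'))$, consistent with the next-state evaluation.
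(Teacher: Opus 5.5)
Your proof is correct and follows the paper's argument step for step. It uses the same induction, applies \cref{lem1} at $\mu^1 = a_*$ to lower-bound the $Q^1_k$ term, applies the third inequality of \cref{assumption2} at $\mu^2(\cdot) = b_k(\cdot,a_*(\cdot))$ to upper-bound the $Q^1_*$ term at a cost of $\gamma\epsilon$, and then uses the inductive hypothesis (with $P \ge 0$) to pass from $Q^1_k$ to $Q^L_k$; your reading of $b_k(s,a_*(s'))$ as $b_k(s',a_*(s'))$ matches the matrix form $M_{[a_*,b_k]}$ in~\eqref{lower comp sys}.
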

\vspace{0.3cm}
\begin{proof}
Suppose that the statement holds for some $k \ge 0$. Then,
\begin{align*}
    & Q^1_{k+1}(s,a,b) - Q^1_*(s,a,b) \\
    =& \gamma \sum_{s' \in {\cal S}} P(s'|s,a,b) Q^1_k(s',a_k(s'),b_k(s,a_k(s'))) \\
    &- \gamma \sum_{s' \in {\cal S}} P(s'|s,a,b) Q^1_*(s',a_*(s'),b_*(s,a_*(s'))) \\
    \ge& \gamma \sum_{s' \in {\cal S}} P(s'|s,a,b) Q^1_k(s',a_*(s'),b_k(s,a_*(s')) \\
    &- \gamma \sum_{s' \in {\cal S}} P(s'|s,a,b) Q^1_*(s',a_*(s'),b_*(s,a_*(s'))) \\
    \ge& \gamma \sum_{s' \in {\cal S}} P(s'|s,a,b) Q^1_k(s',a_*(s'),b_k(s,a_*(s')) \\ 
    &- \gamma \sum_{s' \in {\cal S}} P(s'|s,a,b) Q^1_*(s',a_*(s'),b_k(s,a_*(s')) - \gamma\epsilon \\
    =& \gamma \sum_{s' \in {\cal S}} P(s'|s,a,b) \left\{Q^1_k(s',a_*(s'),b_k(s,a_*(s'))\right. \\
    &- \left.Q^1_*(s',a_*(s'),b_k(s,a_*(s'))\right\} - \gamma\epsilon \\
    \ge& \gamma \sum_{s' \in {\cal S}} P(s'|s,a,b) \left\{Q^L_k(s',a_*(s'),b_k(s,a_*(s'))\right. \\
    &- \left.Q^1_*(s',a_*(s'),b_k(s,a_*(s'))\right\} - \gamma\epsilon \\
    =& Q^L_{k+1}(s,a,b) - Q^1_*(s,a,b)
\end{align*}
where the first equality follows from~\eqref{p1 q update} and the assumption that the maximizer is unique for all states, the first and second inequalities follow from~\cref{lem1,assumption2}, and the last inequality follows from the assumption $Q_k^L(s,a,b) \le Q_k(s,a,b)$. The proof is completed by induction.
\end{proof}
\vspace{0.3cm}

Then, using~\cref{definition1}, the upper comparison system can be expressed as
\begin{equation} \label{upper comp sys}
    Q^U_{k+1} - Q^1_* = \gamma P M_{[a_k,b_*]} \left(Q^U_k - Q^1_*\right) + \gamma\epsilon\textbf{1},
\end{equation}
where $\textbf{1}$ is the all-ones column vector. Moreover, the lower comparison system can be expressed as
\begin{equation} \label{lower comp sys}
    Q^L_{k+1} - Q^1_* = \gamma P M_{[a_*,b_k]} \left(Q^L_k - Q^1_*\right) - \gamma\epsilon\textbf{1}.
\end{equation}
Equations~\eqref{upper comp sys} and~\eqref{lower comp sys} can be interpreted as switching systems, where $M_{[a_k,b_*]}$ and $ M_{[a_*,b_k]}$ vary with $Q_k$. The term $\gamma\epsilon\textbf{1}$ acts as a constant affine term.

\subsection{Finite-Time Error Bound}
By analyzing the two comparison systems~\eqref{upper comp sys} and~\eqref{lower comp sys}, we establish the convergence of Stackelberg Q-value iteration in general-sum Markov games as follows:
\vspace{0.3cm}
\begin{theorem} \label{thm1}
For all $k \geq 0$,
\begin{equation} \label{final conv}
    \left\| Q^1_k-Q^1_* \right\|_{\infty} \le \frac{6}{1-\gamma}\gamma^k + \frac{3\epsilon}{1-\gamma}.
\end{equation}
\end{theorem}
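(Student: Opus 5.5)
We bound the error $Q^1_k-Q^1_*$ from above and below using the two comparison systems~\eqref{upper comp sys} and~\eqref{lower comp sys}. We initialize both systems at the true iterate, $Q^U_0=Q^L_0=Q^1_0$. The two comparison propositions then give
\[ Q^L_k-Q^1_*\le Q^1_k-Q^1_*\le Q^U_k-Q^1_* \]
componentwise for all $k\ge 0$. It therefore suffices to bound $\|Q^U_k-Q^1_*\|_\infty$ and $\|Q^L_k-Q^1_*\|_\infty$ separately and take the maximum. For any vector $x$, the upper bound $x\le u$ together with the lower bound $x\ge l$ implies $\|x\|_\infty\le\max(\|u\|_\infty,\|l\|_\infty)$.

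\textbf{Unrolling the switching systems.} For the upper system, set $e^U_k:=Q^U_k-Q^1_*$ and $A_k:=\gamma P M_{[a_k,b_*]}$. Unrolling gives
\[ e^U_k=A_{k-1}\cdots A_0\, e^U_0+\sum_{j=0}^{k-1}A_{k-1}\cdots A_{j+1}\,\gamma\epsilon\mathbf{1}. \]
The key structural fact is $\|PM_{[\phi,\psi]}\|_\infty=1$ for any selection pair $(\phi,\psi)$. This holds because $P$ is row-stochastic and each row of $M_{[\phi,\psi]}$ is a single standard basis row vector, so the product is row-stochastic. Hence $\|A_j\|_\infty\le\gamma$, and the product of any $m$ such matrices has $\infty$-norm at most $\gamma^m$, regardless of how the switching signal $a_k$ evolves. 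Submultiplicativity then yields
\[ \|e^U_k\|_\infty\le\gamma^k\|e^U_0\|_\infty+\gamma\epsilon\sum_{j=0}^{k-1}\gamma^{j}\le\gamma^k\|Q^1_0-Q^1_*\|_\infty+\frac{\gamma\epsilon}{1-\gamma}. \]
The identical argument with $M_{[a_*,b_k]}$ gives the same bound for $e^L_k$.

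\textbf{Bounding the initial error.} By \cref{assumption1}, $\|Q^1_0\|_\infty\le 1$. For $Q^1_*$, it satisfies the fixed-point form of~\eqref{p1 q update}, and the same argument as in \cref{lem2} gives $\|Q^1_*\|_\infty\le\frac{1}{1-\gamma}$. Thus $\|Q^1_0-Q^1_*\|_\infty\le 1+\frac{1}{1-\gamma}\le\frac{2}{1-\gamma}$. Combining the pieces gives
\[ \|Q^1_k-Q^1_*\|_\infty\le\frac{2}{1-\gamma}\gamma^k+\frac{\epsilon}{1-\gamma}, \]
which is stronger than~\eqref{final conv}, since the constants $6$ and $3$ in the statement leave room for this kind of crude bookkeeping.

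\textbf{Main obstacle.} I expect the delicate step to be making the comparison systems legitimately comparable to the true iterate. The propositions are stated with generic $Q_k$, so one must check that they apply with $Q^U_0=Q^L_0=Q^1_0$ and that the affine terms $\pm\gamma\epsilon\mathbf{1}$ come out with the right signs. A second check is that the switching matrices $M_{[a_k,b_*]}$, which depend on the true iterate, are still row-selection matrices of the form in \cref{definition1}. This is where the norm-one property is needed: the follower's action is evaluated at the leader's chosen action, so the composed selection remains a single entry per state. If this identification needed extra care, I would fall back on directly bounding the scalar recursion for $\max_s|e_k(s,\cdot)|$ as in \cref{lem2}, which yields the same geometric-plus-constant form.
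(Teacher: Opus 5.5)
Your proof is correct and follows the paper's route. It uses the same upper and lower comparison systems, the same one-step contraction from $\|\gamma P M_{[\phi,\psi]}\|_\infty=\gamma$, and the same initial-error bound $\|Q^1_0-Q^1_*\|_\infty\le 2/(1-\gamma)$; you also state the initialization $Q^U_0=Q^L_0=Q^1_0$ explicitly and derive $\|Q^1_*\|_\infty\le 1/(1-\gamma)$ from the fixed-point equation, where the paper leaves the first implicit and attributes the second to \cref{lem2}, which is about the iterates. The only substantive difference is the final step: the paper combines the two comparison bounds through $\|Q^1_k-Q^1_*\|_\infty\le 2\|Q^L_k-Q^1_*\|_\infty+\|Q^U_k-Q^1_*\|_\infty$, which is where its constants $6$ and $3\epsilon$ come from, whereas your componentwise sandwich $Q^L_k\le Q^1_k\le Q^U_k$ gives the maximum of the two errors and hence the sharper bound $\frac{2}{1-\gamma}\gamma^k+\frac{\epsilon}{1-\gamma}$.
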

\vspace{0.3cm}
\begin{proof}
Taking into account the norm of~\eqref{upper comp sys}, one gets
\begin{align*}
    & \left\|Q^U_{k+1} - Q^1_*\right\|_{\infty} \\
    \le& \gamma \left\|P\right\|_{\infty} \left\|M_{[a_k,b_*]}\right\|_{\infty} \left\|Q^U_k - Q^1_*\right\|_{\infty} + \gamma \epsilon \left\|\textbf{1}\right\|_{\infty} \\
    =& \gamma \left\|Q^U_k - Q^1_*\right\|_{\infty} + \gamma\epsilon.
\end{align*}
Then, unrolling the aforementioned inequality from $i=0$ to $k-1$,
\begin{align}
    \left\|Q^U_{k+1} - Q^1_*\right\|_{\infty}
    \le& \gamma^k \left\|Q^U_0 - Q^1_*\right\|_{\infty} + \epsilon \sum_{i=1}^k \gamma \nonumber \\
    \le& \gamma^k \left(\left\|Q^U_0\right\|_{\infty} + \left\|Q^1_*\right\|_{\infty}\right) + \epsilon \sum_{i=0}^{\infty} \gamma \nonumber \\
    \le& \frac{2}{1-\gamma}\gamma^k + \frac{\epsilon}{1-\gamma}, \label{finite time error bound upper}
\end{align}
where the last inequality utilizes~\cref{assumption1,lem2}. Similarly, one can also prove the same finite-time error bound for the lower comparison system. By using the relation
\begin{align*}
    & \left\| Q^1_k-Q^1_* \right\|_{\infty} \\
    =& \left\| Q^1_k-Q^L_k + Q^L_k-Q^1_* \right\|_{\infty} \\
    \le& \left\| Q^L_k-Q^1_* \right\|_{\infty} + \left\| Q^1_k-Q^L_k \right\|_{\infty} \\
    \le& \left\| Q^L_k-Q^1_* \right\|_{\infty} + \left\| Q^U_k-Q^L_k \right\|_{\infty} \\
    \le& \left\| Q^L_k-Q^1_* \right\|_{\infty} + \left\| Q^U_k-Q^1_* + Q^1_*-Q^L_k \right\|_{\infty} \\
    \le& \left\| Q^L_k-Q^1_* \right\|_{\infty} + \left\| Q^U_k-Q^1_* \right\|_{\infty} + \left\| Q^1_*-Q^L_k \right\|_{\infty} \\
    =& 2\left\| Q^L_k-Q^1_* \right\|_{\infty} + \left\| Q^U_k-Q^1_* \right\|_{\infty},
\end{align*}
the final result is obtained by combining~\eqref{finite time error bound upper} with the corresponding error bound for the lower comparison system. Here, the first and fourth inequalities follow from the triangle inequality, while the second inequality follows from the fact that $Q_k^U-Q_k^L \ge Q_k-Q_k^L \ge 0$. 
\end{proof}
\vspace{0.3cm}

By examining the right side of~\eqref{final conv}, the first term diminishes as $k$ goes to infinity with $\gamma \in [0, 1)$. Moreover, it is apparent that the second term is a constant error that can be reduced by the smaller $\epsilon$. This suggests that the error between the Q-value at the $k$-th iteration and $Q^1_*$ is confined within a bounded range. For the follower, we can also get the same finite-time error bound using a similar method.

\section{NUMERICAL EXPERIMENTS}
We consider the following two-player general-sum Markov game to validate the theoretical results. The environment consists of a single state ${\cal S} = \{1\}$, with action spaces ${\cal A} = \{1,2\}$ for the leader and ${\cal B} = \{1,2\}$ for the follower. The transition is deterministic and always remains in the same state. The discount factor is set to $\gamma=0.8$.

The reward functions for the two players are defined as
\[ r^1(s,a,b) =
\begin{cases}
0.8 & (a=1, b=1) \\
0.2 & (a=1, b=2) \\
0.5 & (a=2, b=1) \\
0.9 & (a=2, b=2)
\end{cases} \]
and
\[ r^2(s,a,b) =
\begin{cases}
0.3 & (a=1, b=1) \\
0.9 & (a=1, b=2) \\
0.8 & (a=2, b=1) \\
0.1 & (a=2, b=2)
\end{cases}. \]
Under this setup, the follower's best response to the leader’s action is given by $b_*(s,1) = 2$ and $b_*(s,2) = 1$. Based on this best response, the leader evaluates $r^1(s,1,b_*(s,1)) = 0.2$ and $r^1(s,2,b_*(s,2)) = 0.5$. Then, the leader selects action $a_*(s) = 2$, while the follower selects $b_*(s,a_*(s)) = 1$. Therefore, the unique Stackelberg policy pair is given by $(a_*,b_*) \equiv (2,1)$.

Since the environment contains a single state, $Q^i_*$ admits a closed-form expression. We can get $Q^i_*$ values as $Q^1_*(s,a,b) = r^1(s,a,b) + \gamma r^1(s,2,1) / (1-\gamma)$ and $Q^2_*(s,a,b) = r^2(s,a,b) + \gamma r^2(s,2,1) / (1-\gamma)$.

We evaluate the convergence behavior of the Q-function value by measuring the sup-norm error $\|Q^i_k - Q^i_*\|_\infty$ across iterations. The empirical error is compared against our theoretical upper bounds derived from~\cref{thm1}, where $\epsilon$ is computed from the current Q-values at each iteration to ensure that the condition in~\cref{assumption2} holds. All experiments are repeated over five different seeds, where the Q-functions are initialized randomly at the beginning of training.

\begin{figure}[!t]
    \centering
    \includegraphics[width=\columnwidth]{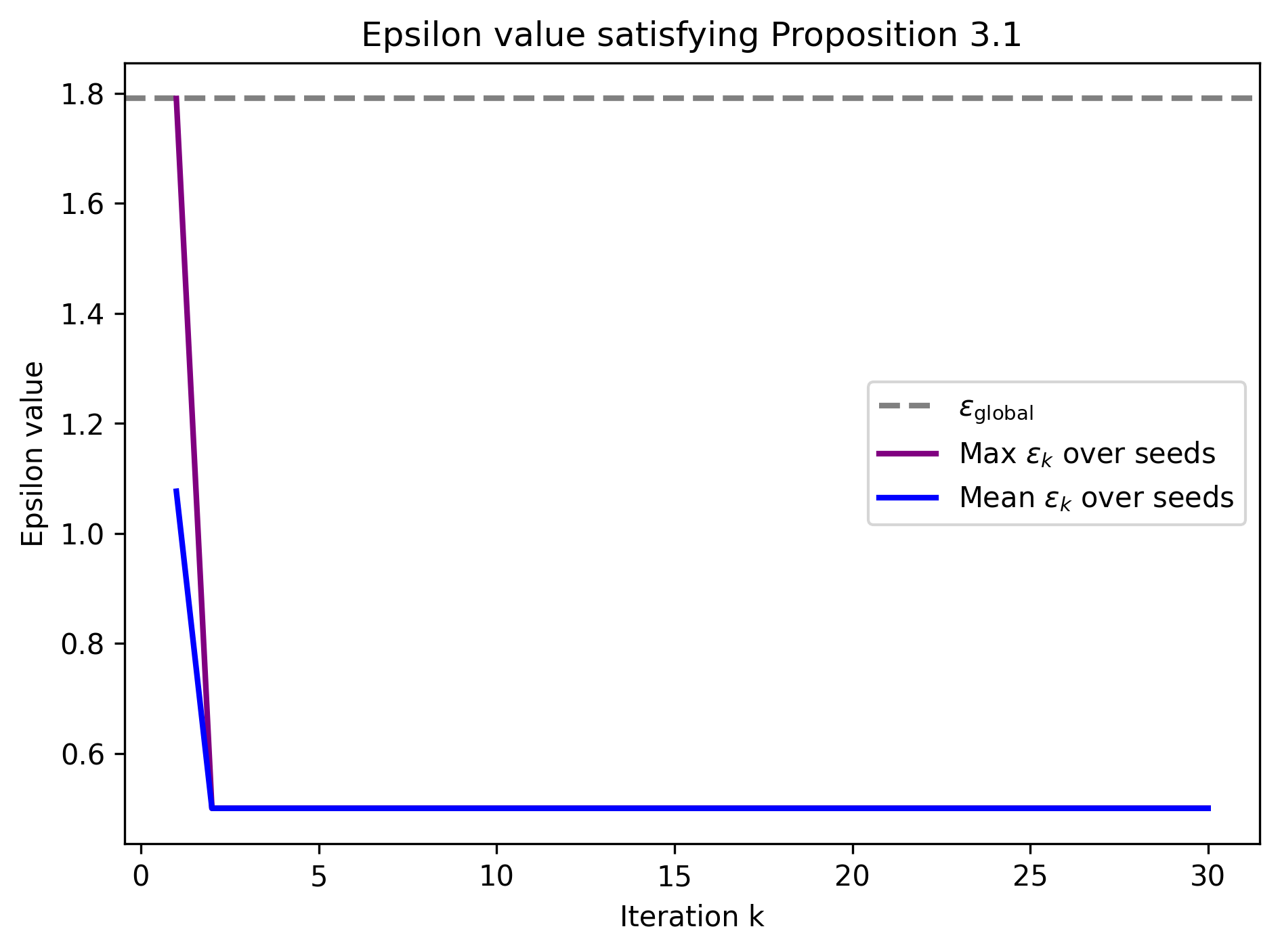}
    \caption{Illustration of epsilon values satisfying~\cref{assumption2}.}
    \label{fig1}
\end{figure}

\cref{fig1} illustrates the epsilon values required to satisfy~\cref{assumption2} across iterations. Here, we consider three types of epsilon values: the maximum value of iteration-dependent epsilon $\epsilon_k$ across different seeds (purple line), the mean value of $\epsilon_k$ across different seeds (blue line), and the minimum constant value that satisfies~\cref{assumption2} over the entire trajectory $\epsilon_{\mathrm{global}} = \max_k \epsilon_k$ (gray dashed line). We observe that the required epsilon values are large during the initial iterations. This is because the Q-functions are randomly initialized, leading to unstable policies and large value discrepancies across actions. As learning progresses, however, the required epsilon value $\epsilon_k$ decreases as the policy becomes more stable and the differences between Q-values diminish. This highlights that the global slack $\epsilon_{\mathrm{global}}$ is dominated by early iterations and can be overly conservative compared to the iteration-dependent $\epsilon_k$.

\begin{figure}[!t]
    \centering
    \includegraphics[width=\columnwidth]{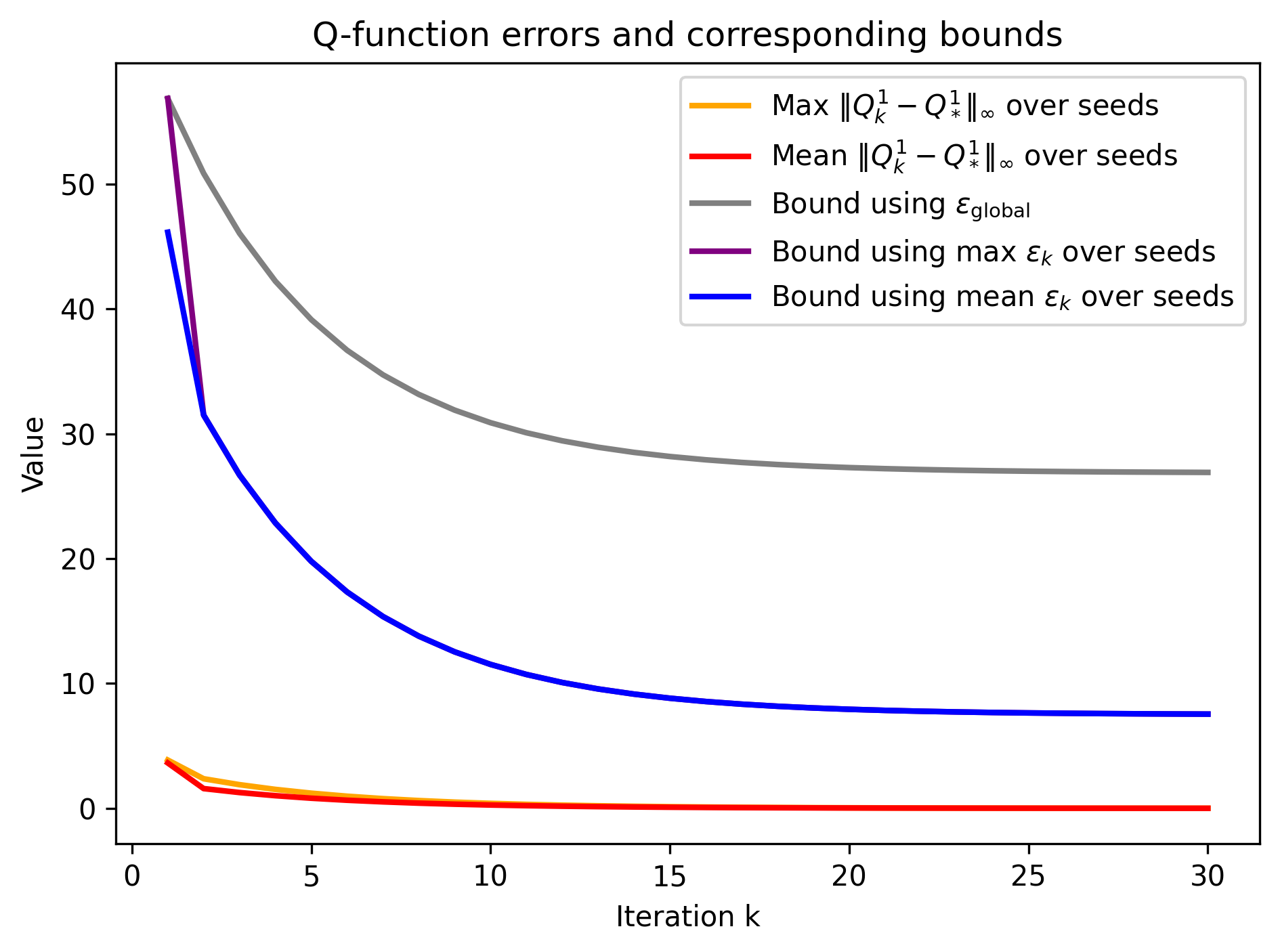}
    \caption{Illustration of Q-function error of the leader and its corresponding bounds from~\cref{thm1}.}
    \label{fig2}
\end{figure}

\begin{figure}[!t]
    \centering
    \includegraphics[width=\columnwidth]{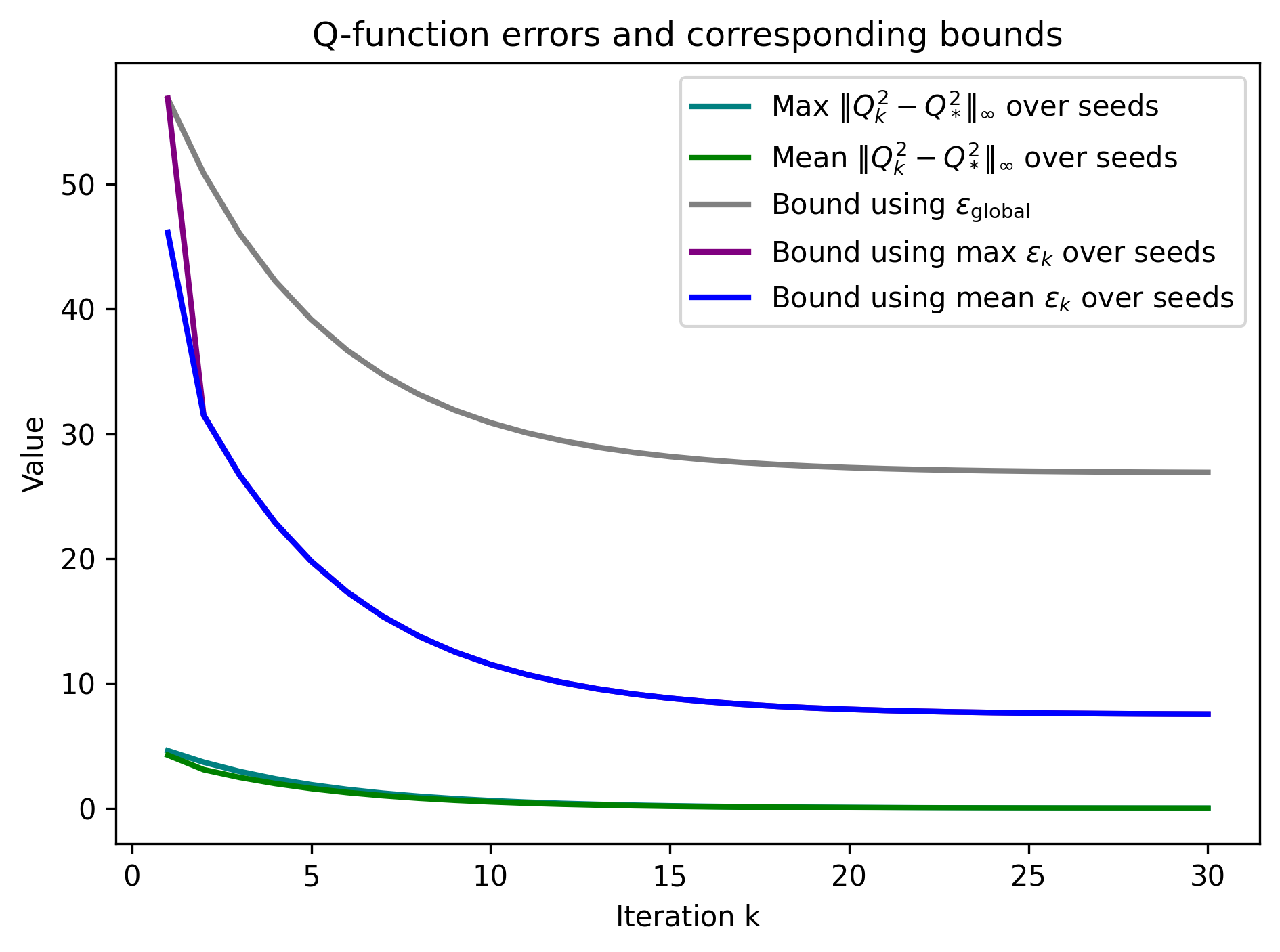}
    \caption{Illustration of Q-function error of the follower and its corresponding bounds from~\cref{thm1}.}
    \label{fig3}
\end{figure}

\cref{fig2,fig3} depict the sup-norm error of the Q-functions for both players and the corresponding theoretical upper bounds derived in~\cref{thm1} across iterations. The results show that the empirical Q-errors of both players remain consistently below the theoretical upper bounds for all iterations $k$. This confirms the validity of the bound in~\cref{thm1}. Moreover, both the empirical errors and the corresponding theoretical upper bounds decrease geometrically, which aligns with the theoretical convergence rate.

In~\cref{fig2,fig3}, the upper bound based on $\epsilon_{\mathrm{global}}$ (gray line) is significantly looser, as it depends on a worst-case value dominated by early iterations. In contrast, the adaptive upper bound using $\epsilon_k$ (blue, purple lines) provides a substantially tighter characterization of the empirical error, demonstrating that the conservativeness of the global bound primarily arises from a few initial iterations. One thing to note is that none of the upper bounds converges to zero, since all epsilon values required to satisfy~\cref{assumption2} are strictly positive. This leads to a non-vanishing residual term in the upper bound.

\section{CONCLUSIONS}
In this paper, we investigated the convergence behavior of Stackelberg Q-value iteration in two-player general-sum Markov games. We proposed a novel perspective that focuses on the dynamics of Q-functions and introduced a relaxed policy condition tailored to the asymmetric nature of Stackelberg interactions. By modeling the learning process as a switching system, we developed a systematic analytical framework that captures the time-varying and Q-function-dependent nature of Stackelberg learning dynamics. Through the construction of upper and lower comparison systems, we established explicit finite-time error bounds for the Q-functions. Our results bridge a gap in the theoretical understanding of Stackelberg learning and extend switching-system-based analysis beyond single-agent and zero-sum settings. Promising directions for future work include extending the framework to Stackelberg Q-learning with stochastic approximation and relaxing the current assumptions to obtain tighter error bounds and more general convergence guarantees.

\bibliographystyle{IEEEtran}
\bibliography{main}

@article{watkins1992q,
  title={Q-learning},
  author={Watkins, Christopher JCH and Dayan, Peter},
  journal={Machine learning},
  volume={8},
  number={3},
  pages={279--292},
  year={1992},
  publisher={Springer}
}

@inproceedings{tan1993multi,
  title={Multi-agent reinforcement learning: Independent vs. cooperative agents},
  author={Tan, Ming and others},
  booktitle={Proceedings of the tenth international conference on machine learning},
  pages={330--337},
  year={1993}
}

@incollection{littman1994markov,
  title={Markov games as a framework for multi-agent reinforcement learning},
  author={Littman, Michael L},
  booktitle={Machine learning proceedings 1994},
  pages={157--163},
  year={1994},
  publisher={Elsevier}
}

@article{hu2003nash,
  title={Nash Q-learning for general-sum stochastic games},
  author={Hu, Junling and Wellman, Michael P},
  journal={Journal of machine learning research},
  volume={4},
  number={Nov},
  pages={1039--1069},
  year={2003}
}

@inproceedings{greenwald2003correlated,
  title={Correlated Q-learning},
  author={Greenwald, Amy and Hall, Keith and Serrano, Roberto and others},
  booktitle={ICML},
  volume={3},
  pages={242--249},
  year={2003}
}

@inproceedings{cisneros2023finite,
  title={Finite-sample guarantees for nash Q-learning with linear function approximation},
  author={Cisneros-Velarde, Pedro and Koyejo, Sanmi},
  booktitle={Uncertainty in Artificial Intelligence},
  pages={424--432},
  year={2023},
  organization={PMLR}
}

@book{liberzon2003switching,
  title={Switching in systems and control},
  author={Liberzon, Daniel},
  volume={190},
  year={2003},
  publisher={Springer}
}

@article{gosavi2006boundedness,
  title={Boundedness of iterates in {Q}-learning},
  author={Gosavi, Abhijit},
  journal={Systems \& Control letters},
  volume={55},
  number={4},
  pages={347--349},
  year={2006}
}

@article{shapley1953stochastic,
  title={Stochastic games},
  author={Shapley, Lloyd S},
  journal={Proceedings of the national academy of sciences},
  volume={39},
  number={10},
  pages={1095--1100},
  year={1953}
}

@inproceedings{bowling2000convergence,
  title={Convergence problems of general-sum multiagent reinforcement learning},
  author={Bowling, Michael},
  booktitle={ICML},
  pages={89--94},
  year={2000}
}

@misc{khalil2002nonlinear,
  title={Nonlinear systems third edition (2002)},
  author={Khalil, Hassan K},
  year={2002},
  publisher={Prentice Hall}
}

@article{littman2001value,
  title={Value-function reinforcement learning in Markov games},
  author={Littman, Michael L},
  journal={Cognitive systems research},
  volume={2},
  number={1},
  pages={55--66},
  year={2001},
  publisher={Elsevier}
}

@inproceedings{fan2020theoretical,
  title={A theoretical analysis of deep Q-learning},
  author={Fan, Jianqing and Wang, Zhaoran and Xie, Yuchen and Yang, Zhuoran},
  booktitle={Learning for dynamics and control},
  pages={486--489},
  year={2020},
  organization={PMLR}
}

@article{song2021can,
  title={When can we learn general-sum Markov games with a large number of players sample-efficiently?},
  author={Song, Ziang and Mei, Song and Bai, Yu},
  journal={arXiv preprint arXiv:2110.04184},
  year={2021}
}

@inproceedings{erez2023regret,
  title={Regret minimization and convergence to equilibria in general-sum markov games},
  author={Erez, Liad and Lancewicki, Tal and Sherman, Uri and Koren, Tomer and Mansour, Yishay},
  booktitle={International Conference on Machine Learning},
  pages={9343--9373},
  year={2023},
  organization={PMLR}
}

@article{zinkevich2005cyclic,
  title={Cyclic equilibria in Markov games},
  author={Zinkevich, Martin and Greenwald, Amy and Littman, Michael},
  journal={Advances in neural information processing systems},
  volume={18},
  year={2005}
}

@article{giannou2022convergence,
  title={On the convergence of policy gradient methods to Nash equilibria in general stochastic games},
  author={Giannou, Angeliki and Lotidis, Kyriakos and Mertikopoulos, Panayotis and Vlatakis-Gkaragkounis, Emmanouil-Vasileios},
  journal={Advances in Neural Information Processing Systems},
  volume={35},
  pages={7128--7141},
  year={2022}
}

@article{li2020sample,
  title={Sample complexity of asynchronous {Q}-learning: sharper analysis and variance reduction},
  author={Li, Gen and Wei, Yuting and Chi, Yuejie and Gu, Yuantao and Chen, Yuxin},
  journal={arXiv preprint arXiv:2006.03041},
  year={2020}
}

@article{chen2021lyapunov,
  title={A {L}yapunov theory for finite-sample guarantees of asynchronous {Q}-learning and {TD}-learning variants},
  author={Chen, Zaiwei and Maguluri, Siva Theja and Shakkottai, Sanjay and Shanmugam, Karthikeyan},
  journal={arXiv preprint arXiv:2102.01567},
  year={2021}
}

@inproceedings{lee2020periodic,
  title={Periodic {Q}-learning},
  author={Lee, Donghwan and He, Niao},
  booktitle={Learning for dynamics and control},
  pages={582--598},
  year={2020}
}

@article{fiez2019convergence,
  title={Convergence of learning dynamics in stackelberg games},
  author={Fiez, Tanner and Chasnov, Benjamin and Ratliff, Lillian J},
  journal={arXiv preprint arXiv:1906.01217},
  year={2019}
}

@inproceedings{lee2020unified,
  title={A unified switching system perspective and convergence analysis of {Q}-learning algorithms},
  author={Lee, Donghwan and He, Niao},
  booktitle={34th Conference on Neural Information Processing Systems, NeurIPS 2020},
  year={2020}
}

@article{lee2021discrete,
  title={A discrete-time switching system analysis of {Q}-learning},
  author={Lee, Donghwan and Hu, Jianghai and He, Niao},
  journal={SIAM Journal on Control and Optimization (accepted)},
  year={2022}
}

@inproceedings{jeong2024finite,
  title={Finite-time error analysis of soft q-learning: Switching system approach},
  author={Jeong, Narim and Lee, Donghwan},
  booktitle={2024 IEEE 63rd Conference on Decision and Control (CDC)},
  pages={3897--3904},
  year={2024},
  organization={IEEE}
}

@inproceedings{jeong2025finite,
  title={Finite-time analysis of minimax Q-learning},
  author={Jeong, Narim and Lee, Donghwan},
  booktitle={Reinforcement Learning Conference},
  year={2025}
}

@article{he2025learning,
  title={Learning in Stackelberg Markov Games},
  author={He, Jun and Liu, Andrew L and Chen, Yihsu},
  journal={arXiv preprint arXiv:2509.16296},
  year={2025}
}

@inproceedings{sadigh2016planning,
  title={Planning for autonomous cars that leverage effects on human actions.},
  author={Sadigh, Dorsa and Sastry, Shankar and Seshia, Sanjit A and Dragan, Anca D},
  booktitle={Robotics: Science and systems},
  volume={2},
  pages={1--9},
  year={2016},
  organization={Ann Arbor, MI, USA}
}

@article{wang2021game,
  title={Game-theoretic planning for self-driving cars in multivehicle competitive scenarios},
  author={Wang, Mingyu and Wang, Zijian and Talbot, John and Gerdes, J Christian and Schwager, Mac},
  journal={IEEE Transactions on Robotics},
  volume={37},
  number={4},
  pages={1313--1325},
  year={2021},
  publisher={IEEE}
}

@book{krishna2009auction,
  title={Auction theory},
  author={Krishna, Vijay},
  year={2009},
  publisher={Academic press}
}

@inproceedings{sinha2018stackelberg,
  title={Stackelberg security games: Looking beyond a decade of success},
  author={Sinha, Arunesh and Fang, Fei and An, Bo and Kiekintveld, Christopher and Tambe, Milind},
  year={2018},
  organization={IJCAI}
}

@book{bacsar1998dynamic,
  title={Dynamic noncooperative game theory},
  author={Ba{\c{s}}ar, Tamer and Olsder, Geert Jan},
  year={1998},
  publisher={SIAM}
}

@inproceedings{fiez2020implicit,
  title={Implicit learning dynamics in stackelberg games: Equilibria characterization, convergence analysis, and empirical study},
  author={Fiez, Tanner and Chasnov, Benjamin and Ratliff, Lillian},
  booktitle={International conference on machine learning},
  pages={3133--3144},
  year={2020},
  organization={PMLR}
}

@inproceedings{vu2022stackelberg,
  title={Stackelberg policy gradient: Evaluating the performance of leaders and followers},
  author={Vu, Quoc-Liem and Alumbaugh, Zane and Ching, Ryan and Ding, Quanchen and Mahajan, Arnav and Chasnov, Benjamin and Burden, Sam and Ratliff, Lillian J},
  booktitle={ICLR 2022 Workshop on Gamification and Multiagent Solutions},
  year={2022}
}

@article{beck2023survey,
  title={A survey on bilevel optimization under uncertainty},
  author={Beck, Yasmine and Ljubi{\'c}, Ivana and Schmidt, Martin},
  journal={European Journal of Operational Research},
  volume={311},
  number={2},
  pages={401--426},
  year={2023},
  publisher={Elsevier}
}

@inproceedings{zhu2023multi,
  title={A multi-agent Q-learning with value function approximation based on single-leader multi-followers stackelberg game},
  author={Zhu, Chunxi and Yu, Wenwu and Wang, He},
  booktitle={2023 IEEE 13th International Conference on CYBER Technology in Automation, Control, and Intelligent Systems (CYBER)},
  pages={1229--1234},
  year={2023},
  organization={IEEE}
}

@inproceedings{zheng2022stackelberg,
  title={Stackelberg actor-critic: Game-theoretic reinforcement learning algorithms},
  author={Zheng, Liyuan and Fiez, Tanner and Alumbaugh, Zane and Chasnov, Benjamin and Ratliff, Lillian J},
  booktitle={Proceedings of the AAAI conference on artificial intelligence},
  volume={36},
  number={8},
  pages={9217--9224},
  year={2022}
}

@article{zhong2021can,
  title={Can reinforcement learning find Stackelberg-Nash equilibria in general-sum Markov games with myopic followers?},
  author={Zhong, Han and Yang, Zhuoran and Wang, Zhaoran and Jordan, Michael I},
  journal={arXiv preprint arXiv:2112.13521},
  year={2021}
}

@article{yu2022learning,
  title={Learning correlated stackelberg equilibrium in general-sum multi-leader-single-follower games},
  author={Yu, Yaolong and Xu, Haifeng and Chen, Haipeng},
  journal={arXiv preprint arXiv:2210.12470},
  year={2022}
}

@article{niu2025finding,
  title={Finding a Multiple Follower Stackelberg Equilibrium: A Fully First-Order Method},
  author={Niu, April and Wang, Kai and Ziani, Juba},
  journal={arXiv preprint arXiv:2509.08161},
  year={2025}
}

@article{bai2021sample,
  title={Sample-efficient learning of stackelberg equilibria in general-sum games},
  author={Bai, Yu and Jin, Chi and Wang, Huan and Xiong, Caiming},
  journal={Advances in Neural Information Processing Systems},
  volume={34},
  pages={25799--25811},
  year={2021}
}


\end{document}